\algnewcommand\algorithmicforeach{\textbf{for each}}
\pgfplotsset{compat=newest}
\pgfplotsset{compat=1.17}
\definecolor{customgreen}{rgb}{0.0, 0.6, 0.0}
\pgfplotsset{compat=1.18}
\pgfplotsset{compat=newest}
\newtheorem{assumption}{Assumption}
\newtheorem{remark}{Remark}
\newtheorem{theorem}{Theorem}
\title{\LARGE A Taylor Series Approach to Correct Localization Errors in Robotic Field Mapping using Gaussian Processes}
\author{Muzaffar Qureshi$^{1}$ \and Tochukwu Elijah Ogri$^{1}$  \and Kyle Volle$^{2}$ \and Rushikesh Kamalapurkar$^{1}$
\thanks{This research was supported in part by the Air Force Research Laboratories under contract numbers FA8651-24-1-0019 and FA8651-23-1-0006 and the Office of Naval Research under contract number N00014-21-1-2481. Any opinions, findings, or recommendations in this article are those of the author(s), and do not necessarily reflect the views of the sponsoring agencies.}%
\thanks{$^{1}$ Department of Mechanical and Aerospace Engineering, University of Florida, Gainesville, Florida, USA, email: {\tt\footnotesize \{muzaffar.qureshi, tochukwu.ogri, rkamalapurkar\} @ufl.edu}.}%
\thanks{$^{2}$ Torch Technologies, Shalimar, Florida, USA, email: {
\tt \footnotesize Kyle.Volle@torchtechnologies.com}}}
\begin{document}
\maketitle
\pagestyle{empty}

\begin{abstract}
Gaussian Processes (GPs) are powerful non-parametric Bayesian models for regression of scalar fields, formulated under the assumption that measurement locations are perfectly known and the corresponding field measurements have Gaussian noise. However, many real-world scalar field mapping applications rely on sensor-equipped mobile robots to collect field measurements, where imperfect localization introduces state uncertainty. Such discrepancies between the estimated and true measurement locations degrade GP mean and covariance estimates. To address this challenge, we propose a method for updating the GP models when improved estimates become available. Leveraging the differentiability of the kernel function, a second-order correction algorithm is developed using the precomputed Jacobians and Hessians of the GP mean and covariance functions for real-time refinement based on measurement location discrepancy data. Simulation results demonstrate improved prediction accuracy and computational efficiency compared to full model retraining.
\end{abstract}

\section{Introduction}
Mapping of unknown fields is critical in various scientific and engineering domains, including environmental monitoring, search and rescue missions, and autonomous underwater exploration \cite{muzaffar.Sabella1998,muzaffar.Lin.Liu.ea2019}. Among the various available mapping techniques, Gaussian Process (GP) have emerged as a powerful approach due to their ability to capture complex spatial dependencies and to provide uncertainty quantification \cite{muzaffar.Schulz.Speekenbrink.ea2018,SCC.Rasmussen.Williams2006}. Standard GP regression operates under the assumptions that measurement locations are known and the corresponding measurements have Gaussian noise. 

In many GP-based mapping applications, a sensor-equipped robot navigates through the environment and gathers field measurements \cite{muzaffar.Elhashash.Albanwan.ea2022}. In practice, the true pose of the robot may need to be estimated \cite{muzaffar.Bai.Taylor2020}. When estimating the robot pose, error sources such as unmodeled robot dynamics, environmental disturbances, or miscalibration can cause errors in the estimate. This can result in improper inputs into the GP and corrupt the data if not accounted for \cite{SCC.Rasmussen.Williams2006,muzaffar.Uzzaman.Bai2024}.

In most field mapping problems, global positioning system (GPS) sensors are used to provide localization for the robot during the measurement process. However, GPS signals may be unavailable or unreliable in environments with occlusions, signal interference, or other challenging conditions \cite{muzaffar.Bachrach.Prentice.ea2011, muzaffar.Weiss.Scaramuzza.ea2011, muzaffar.Qureshi.Ogri.ea2025a, muzaffar.Qureshi.Ogri.ea2024}. In the absence of GPS corrections, uncertainties in input locations propagate through the GP model, leading to degraded performance. 

Girard et al. \cite{muzaffar.Girard2004} developed a framework to propagate the uncertainty in the GP model. By modeling the input to the GP as a Gaussian random variable and leveraging the analytical properties of the squared exponential (SE) kernel, their approach yields closed-form expressions for the mean and variance. This method is effective for dynamic systems, finding application in discrete-time nonlinear models \cite{muzaffar.Candela.Girard.ea2003} and iterative $k$-step-ahead time-series forecasting \cite{muzaffar.Girard2004}, as it enables the principled propagation of accumulating uncertainty. The tractability of this framework imposes notable limitations. The reliance on specific, analytically tractable kernels restricts generality, and it becomes intractable if the input uncertainty deviates from the Gaussian assumption. When these assumptions are violated, computationally expensive approximations are required, which in turn complicates the task of hyperparameter estimation in complex or high-dimensional datasets \cite{muzaffar.Tripathy.Bilionis.ea2016}.

As an alternative, Mchutchon and Rasmussen presented the noisy input GP (NIGP) model, which provides an approximate analytical solution for handling input uncertainty \cite{muzaffar.Mchutchon.Rasmussen2011}. Rather than requiring a specific kernel, their method employs a first-order Taylor expansion to linearize the function at each noisy measurement location. This approximation enables the input noise to be recast as a corrective term to the output variance, which is proportional to the squared gradient of the posterior mean. All model hyperparameters, including the input noise variances, are then jointly learned by maximizing the marginal likelihood. The primary limitation of NIGP stems from its first-order approximation, which can become inaccurate for functions exhibiting non-linearities \cite{muzaffar.Mchutchon.Rasmussen2011}. Furthermore, the need to optimize additional hyperparameters for the input noise variances increases the computational complexity of the training process. The model also assumes a constant input noise variance, limiting its efficacy in more general scenarios where measurement location uncertainty varies across the dataset.

While the aforementioned methods to address measurement location uncertainty assume that errors are stochastic, zero-mean, and Gaussian. The objective of these methods is to compensate for this uncertainty by adjusting the covariance function or incorporating a corrective variance term. However, many real-world mapping applications are dominated by systematic and deterministic errors in measurement locations rather than random noise. An example is the accumulating dead-reckoning drift that occurs during navigation in GPS-denied environments, which introduces structured, non-stationary errors into the position estimates \cite{muzaffar.Qureshi.Ogri.ea2025, muzaffar.Bachrach.Prentice.ea2012}. 

In graph-based SLAM systems, as well as Kalman filter and observer-based designs, the arrival of new state information, such as through loop closure or GPS reacquisition, triggers a global optimization that refines the entire trajectory history \cite{muzaffar.Hess.Kohler.ea2016}. This process deterministically replaces prior state estimates with improved ones, without modeling measurement locations as random variables \cite{muzaffar.Placed.Strader.ea2023}. The GP model must then be updated to reflect these revised inputs, but retraining the complete model on the corrected dataset is computationally prohibitive. 

Addressing the challenge of deterministic input errors requires a method to retroactively correct GP-based maps when improved state estimates become available, thereby avoiding computationally prohibitive retraining. The difficulty in designing a GP corrective framework stems from the dependence of the mean and covariance functions on the inverse of the kernel matrix, where a change to a measurement location requires a computationally prohibitive $O(T^3)$ retraining, where $T$ denotes the number of measurement locations. This paper develops a two-stage correction framework: an offline phase that precomputes and stores the Jacobians and Hessians of the GP mean and covariance, and an online phase that applies real-time corrections using this precomputed data as improved location estimates become available. The key contribution of this work is the development of a gradient-based correction method for GP models that corrects deterministic measurement location errors.

\section{Problem Formulation}
Consider a sensor-equipped robot operating in a domain $\mathcal{X} \subset \mathbb{R}^n$ that is tasked with visiting a set of measurement locations to create a map of a scalar field \( f: \mathcal{X} \rightarrow \mathbb{R} \). The field can be modeled by GP as 
\begin{equation}
\hat{f} \sim \text{GP}(\mu, \Sigma), 
\label{eq:GP_model}
\end{equation}  
where $\mu:\mathbb{R}^n \rightarrow \mathbb{R}$ is the mean function and $\Sigma: \mathbb{R}^n \times \mathbb{R}^n \rightarrow \mathbb{R}$ is the covariance function.

To develop the GP model, the robot is programmed to follow a planned trajectory, visiting a set of planned measurement locations $\hat{\mathbf{X}} = \{ \hat{\mathbf{x}}_i \}_{i=1}^{T}$, where $\hat{\mathbf{x}}_i \in \mathcal{X}$ and $T \in \mathbb{N}$ is the total number of measurements. However, due to unmodeled dynamics or localization errors, the trajectory of the robot deviates from the planned trajectory. This results in the robot taking measurements at a set of actual locations denoted as $\mathbf{X} = \{ \mathbf{x}_i\}_{i=1}^{T}$. The robot collects the sensor measurements $\mathbf{Y} =[y_1,\dots, y_T ]^\top \in \mathbb{R}^{T \times 1}$ at these unknown locations, with
\begin{equation}
    y_i = f(\mathbf{x}_i) + \epsilon_{i},
\end{equation}
where $\epsilon_{i} \sim \mathcal{N}(0, \sigma_y^2)$ is a zero-mean Gaussian noise with variance $\sigma_y^2$.

The robot does not account for localization error during training and incorrectly associates the actual measurements $\mathbf{Y}$ with its planned locations $\hat{\mathbf{X}}$. The resulting corrupted GP model is thus trained on the mismatched dataset $\{\hat{\mathbf{X}}, \mathbf{Y}\}$. 

Using the state estimation techniques, the error between the true and planned measurement locations can be computed as
\begin{equation}
\mathbf{x}_i = \hat{\mathbf{x}}_i + \delta_i, \quad \forall i=1,2,\dots,T,
\end{equation}
where $\delta_i \in \mathbb{R}^n$ is the localization error for the $i$-th measurement location. The objective is to develop a correction method for the corrupted GP model using the known localization errors $\delta_i$. The aim is to approximate the predictive performance of an ideal GP model trained on the true dataset ${ \mathbf{X}, \mathbf{Y} }$, assuming perfect localization. Importantly, the correction is performed without retraining the complete GP model or any new measurements. The following assumptions are made to keep the problem tractable.

\begin{assumption}\label{assump:1}
The measurement noise $\epsilon_i$ is modeled as independent and identically distributed (i.i.d.) Gaussian noise with zero mean and variance $\sigma_y^2$, i.e., $\epsilon_i \sim \mathcal{N}(0, \sigma_y^2)$ and $\mathbb{E}[\epsilon_i \epsilon_j] = 0$ for all $i \neq j$.
\end{assumption}

\begin{assumption}\label{assump:2}
The error between the planned and the actual measurement locations is bounded, i.e., $\|\delta_i\| \leq \delta_{\max}, \forall i =1,2,\dots,T$.
\end{assumption}

\section{Gaussian process Regression}
In this paper, the unknown function \( f \) is modeled as a sample of a GP,
with the kernel $k$ selected to be SE kernel \cite{SCC.Rasmussen.Williams2006}, defined as
\begin{equation}
k (\hat{\mathbf{x}}, \hat{\mathbf{x}}') \coloneqq \alpha^2 \exp \left( -\frac{|| \hat{\mathbf{x}} - \hat{\mathbf{x}}'||^2}{2 \beta^2} \right),
\label{eq:kernel}
\end{equation}
where $\alpha^2$ denotes the signal variance and $\beta$ represents the characteristic lengthscale, and $\hat{\mathbf{x}}$,  $\hat{\mathbf{x}}'$ represent any two locations in $
\mathcal{X}$. The SE kernel is selected due to its smoothness properties \cite{SCC.Rasmussen.Williams2006} and its universal approximation property \cite{muzaffar.Park.Sandberg1991}.

Let \( \mathbf{X}_e = \{ \mathbf{x}_{e,1}, \mathbf{x}_{e,2}, \dots, \mathbf{x}_{e,M} \}  \ \subset \mathcal{X} \) be the set of test locations for GP posterior evaluation. The constant test-test kernel matrix ${K}_{e-e} \in \mathbb{R}^{M \times M}$, which depends only on the fixed test locations, can be computed as
\begin{equation}
{K}_{e-e}[i,j] = k(\mathbf{x}_{e,i}, \mathbf{x}_{e,j}).
\label{eq:test_test_kernel}
\end{equation}
The kernel functions, which depend on any general set of $T$ training locations 
$\mathbf{Z} = \{ \mathbf{z}_t \}_{t=1}^T \in \mathcal{X}^T$, where $\mathcal{X}^T \coloneqq \mathcal{X} \times \dots \times \mathcal{X}$ is the $T$-fold Cartesian product of $\mathcal{X}$, can be defined as
\begin{gather}
{K}_{e-\mathrm{T}}(\mathbf{Z})[i,k] := k(\mathbf{x}_{e,i}, \mathbf{z}_k), \label{eq:test_trn_kernel} \\
{K}_{\mathrm{T}-\mathrm{T}}(\mathbf{Z})[j,k] := k\left(\mathbf{z}_j, \mathbf{z}_k \right), \label{eq:trn_trn_kernel}
\end{gather}
where ${K}_{e-\mathrm{T}}: \mathcal{X}^T \to \mathbb{R}^{M \times T}$ and ${K}_{\mathrm{T}-\mathrm{T}}: \mathcal{X}^T \to \mathbb{R}^{T \times T}$ are the test-train and train-train kernel functions using fixed set $\mathbf{X}_e$, respectively. For a given input $\mathbf{Z}$, they evaluate to the corresponding matrices.

Using these definitions, the GP mean function 
\( \mathbf{m}: \mathcal{X}^T \rightarrow \mathbb{R}^M \)
and GP covariance function 
\( \mathbf{S}: \mathcal{X}^T \rightarrow \mathbb{R}^{M \times M} \)
are defined as
\begin{equation}
\mathbf{m}(\mathbf{Z}) \coloneqq 
{K}_{e-\mathrm{T}}(\mathbf{Z}) 
\left( {K}_{\mathrm{T}-\mathrm{T}}(\mathbf{Z}) + \sigma_y^2 \mathbf{I}_T \right)^{-1} 
\mathbf{Y},
\label{eq:mean_function_def}
\end{equation}
\begingroup\medmuskip=0mu\thinmuskip=0mu\thickmuskip=0mu\begin{equation}
\mathbf{S}(\mathbf{Z}) \coloneqq 
\mathbf{K}_{e-e} -
{K}_{e-\mathrm{T}}(\mathbf{Z})
\left( {K}_{\mathrm{T}-\mathrm{T}}(\mathbf{Z}) + \sigma_y^2 \mathbf{I}_T \right)^{-1}
{K}_{e-\mathrm{T}}(\mathbf{Z})^\top,
\label{eq:cov_function_def}
\end{equation}
\endgroup
respectively, where
\( \mathbf{m} \in \mathbb{R}^{M} \) 
is the mean vector,
\( \mathbf{S} \in \mathbb{R}^{M \times M} \) 
is the covariance matrix, and 
\( \mathbf{I}_T \) denotes the identity matrix of size \( T \). The predicted mean \( \hat{\mathcal{M}} \) and covariance \( \hat{\mathcal{S}} \) of the corrupted GP model 
are the evaluations of these functions at the planned measurement locations \( \hat{\mathbf{X}} \) and true measurements  \( \hat{\mathbf{Y}} \)  computed as
\begin{equation}
\hat{\mathcal{M}} \coloneqq 
{K}_{e-\mathrm{T}}(\mathbf{\hat{X}})
\left( {K}_{\mathrm{T}-\mathrm{T}}(\mathbf{\hat{X}}) + \sigma_y^2 \mathbf{I}_T \right)^{-1}
\mathbf{Y},
\label{eq:meanmatrix_vec}
\end{equation}
\begingroup\medmuskip=0mu\thinmuskip=0mu\thickmuskip=0mu\begin{equation}
\hat{\mathcal{S}} \coloneqq 
\mathbf{K}_{e-e} -
{K}_{e-\mathrm{T}}(\mathbf{\hat{X}})
\left( {K}_{\mathrm{T}-\mathrm{T}}(\mathbf{\hat{X}}) + \sigma_y^2 \mathbf{I}_T \right)^{-1}
{K}_{e-\mathrm{T}}(\mathbf{\hat{X}})^\top.
\label{eq:comatrix_mat}
\end{equation}\endgroup
A sensitivity analysis of the Gaussian Process mean and covariance functions is conducted with respect to perturbations in the training input locations. A second-order Taylor series expansion is derived around the planned measurement locations \( \hat{\mathbf{X}} \), enabling efficient correction of the GP posterior under localization error.

\section{Taylor Series Approximation of Mean and Covariance functions}
In GP regression, the mean function $\mathbf{m}$ is computed using a linear combination of the observed measurements, \({\mathbf{Y}}\), weighted by the correlation between the test and the train locations. These weights are derived from the kernel matrix \(K_{e-\mathrm{T}}\) in \eqref{eq:test_trn_kernel} and the inverse of the kernel matrix \(K_{\mathrm{T}-\mathrm{T}}\) in \eqref{eq:trn_trn_kernel}, which both contribute to the sensitivity of the GP mean to changes in the measurement locations. Similarly, the covariance  \({\mathbf{S}}\), which quantifies the uncertainty in the predictions, is governed by the matrices \(K_{e-e}\), \(K_{e-\mathrm{T}}\), and \(K_{\mathrm{T}-\mathrm{T}}\) and quantify correlation between the test and measurement locations. 

To quantify the effect of the error between the true measurement locations $\mathbf{X}$ and the planned locations $\hat{\mathbf{X}}$ on the GP mean and covariance functions, a second-order Taylor series expansion is performed at each of the planned measurement locations $\hat{\mathbf{x}}_i \in \mathbb{R}^n$ for all $i = 1, \dots, T$. The corrected mean vector $\mathcal{M} \coloneqq \mathbf{m}(\mathbf{X}) \in \mathbb{R}^{M \times 1}$ is approximated using a second-order Taylor expansion around $\hat{\mathbf{X}}$ as 
\begin{equation}\label{eq:mean_update}
\mathcal{M} \coloneqq \hat{\mathcal{M}}
+ \sum_{i=1}^{T} \mathbf{J}_M^i \delta_i
+ \frac{1}{2} \sum_{i=1}^{T} \sum_{j=1}^{T} \left[
\begin{array}{c}
\delta_i^\top \mathbf{H}_{M}^{i,j}[1,\cdot,\cdot] \delta_j \\
\delta_i^\top \mathbf{H}_{M}^{i,j}[2,\cdot,\cdot] \delta_j \\
\vdots \\
\delta_i^\top \mathbf{H}_{M}^{i,j}[M,\cdot,\cdot] \delta_j
\end{array}
\right],
\end{equation}
where \( \delta_i \in \mathbb{R}^{n \times 1} \) is the error at location \( \hat{\mathbf{x}}_i \). The Jacobian and Hessian matrices are defined as $\mathbf{J}_{M}^{i} \coloneqq \left. \frac{\partial \mathbf{m}}{\partial \mathbf{z}_i} \right|_{\mathbf{Z} = \hat{\mathbf{X}}} \in \mathbb{R}^{M \times n},$ and $\mathbf{H}_{M}^{i,j} \coloneqq \left. \frac{\partial^2 \mathbf{m}}{\partial \mathbf{z}_i \partial \mathbf{z}_j^\top} \right|_{\mathbf{Z} = \hat{\mathbf{X}}} \in \mathbb{R}^{M \times n \times n}$,
respectively. Here, $\mathbf{H}_{M}^{i,j}[t,:,:] \in \mathbb{R}^{n \times n}$ denotes the Hessian matrix of the $t$-th component of $\mathbf{m}$ with respect to $\mathbf{z}_i$ and $\mathbf{z}_j$.

Similarly, the corrected covariance matrix $\mathcal{S} \coloneqq \mathbf{S}(\mathbf{X}) \in \mathbb{R}^{M \times M}$ is approximated using a second-order Taylor expansion around $\hat{\mathbf{X}}$ as
{\small
\begin{multline}\label{eq:cov_update}
\mathcal{S} \coloneqq \hat{\mathcal{S}}
+ \sum_{i=1}^{T} \mathbf{J}_{\mathcal{S}}^i\delta_i \\ + \frac{1}{2} \sum_{i=1}^{T} \sum_{j=1}^{T} 
\scalebox{0.85}{$
\left[
\begin{array}{ccc}
\delta_i^\top \mathbf{H}_{\mathcal{S}}^{i,j}[1,1,:,:] \delta_j & \cdots & \delta_i^\top \mathbf{H}_{\mathcal{S}}^{i,j}[1,M,:,:] \delta_j \\
\vdots & \ddots & \vdots \\
\delta_i^\top \mathbf{H}_{\mathcal{S}}^{i,j}[M,1,:,:] \delta_j & \cdots & \delta_i^\top \mathbf{H}_{\mathcal{S}}^{i,j}[M,M,:,:] \delta_j
\end{array}
\right]$},
\end{multline}
}where $\mathbf{J}_{\mathcal{S}}^{i} \coloneqq 
\left. \frac{\partial \mathbf{S}}{\partial \mathbf{z}_i} \right|_{\mathbf{Z} = \hat{\mathbf{X}}} \in \mathbb{R}^{M \times M \times n}$ denotes the Jacobian and $\mathbf{H}_{\mathcal{S}}^{i,j} \coloneqq 
\left. \frac{\partial^2 \mathbf{S}}{\partial \mathbf{z}_i \partial \mathbf{z}_j^\top} \right|_{\mathbf{Z} = \hat{\mathbf{X}}} \in \mathbb{R}^{M \times M \times n \times n}$ denotes the Hessian tensor of the covariance function. 

While the expressions for the corrected mean and covariance in \eqref{eq:mean_update} and \eqref{eq:cov_update} involve first and second order gradients with respect to each measurement location, the computational burden appears significant. However, a closer analysis of the structure of the Jacobian and Hessian tensors reveals that most entries are zero. This sparsity arises, as explained in the next section, because differentiating the kernel function with respect to a specific measurement location \( {\mathbf{z}}_k \) only influences the $k$-th column in $K_{e-\mathrm{T}}$ and $k$-th row and column in $K_{\mathrm{T}-\mathrm{T}}$. 

\subsection{Gradient of the Test-Train Kernel Matrix}
To find the gradient of the kernel matrix $K_{e-\mathrm{T}}$, we first compute the gradient of the kernel function \( k(\mathbf{x}, {\mathbf{z}}) \) with respect to its second argument, \( {\mathbf{z}} \). For the SE kernel defined in Equation~\eqref{eq:kernel}, this gradient $
\frac{\partial k(\mathbf{x}, {\mathbf{z}})}{\partial {\mathbf{z}}} \in \mathbb{R}^{n}$ is given by
\begin{equation}
\frac{\partial k(\mathbf{x}, \mathbf{z})}{\partial \mathbf{z}}
= \frac{1}{\beta^2} (\mathbf{x} - \mathbf{z}) \, k(\mathbf{x}, \mathbf{z}).
\end{equation}
Stacking the gradients for all \( M \) test points, the gradient of the test-train kernel matrix \(K_{e-\mathrm{T}} \) with respect to a single training input \( {\mathbf{z}}_k \) is a tensor in \( \mathbb{R}^{M \times T \times n} \). As only the \( k \)-th column of \(K_{e-\mathrm{T}} \) depends on \( {\mathbf{z}}_k \), the tensor is sparse, with non-zero entries defined as
\begin{equation}
\left[ \frac{\partial {K}_{e\text{-}\mathrm{T}}}{\partial \mathbf{z}_k} \right]_{i,j,:} \bigg|_{\mathbf{Z}=\hat{\mathbf{X}}}
=
\begin{cases}
\left( \frac{\partial k(\mathbf{x}_{e,i}, \mathbf{z})}{\partial \mathbf{z}} \big|_{\mathbf{z} = \hat{\mathbf{x}}_k} \right)^\top, & j = k, \\
\mathbf{0}_n^\top, & \text{otherwise},
\end{cases}
\end{equation}
where \( \left[ \cdot \right]_{i,j,:} \) denotes the \( 1 \times n \) vector at the \( (i,j) \) position of the tensor.

\subsection{Gradient of the Training Kernel Matrix}
For the training kernel matrix \( K_{\mathrm{T}-\mathrm{T}} \in \mathbb{R}^{T \times T} \), the gradient with respect to the training location \( {\mathbf{z}}_k \in \mathbb{R}^n \) affects both the \( k \)-th row and \( k \)-th column. The full gradient tensor is given by $\frac{\partial {K}_{\mathrm{T}\text{-}\mathrm{T}}}{\partial \mathbf{z}_k} \bigg|_{\mathbf{Z} = \hat{\mathbf{X}}} 
\in \mathbb{R}^{T \times T \times n}$, where each plane \( \left[ \frac{\partial K_{\mathrm{T}-\mathrm{T}}}{\partial \hat{\mathbf{x}}_k} \right]_{j,l,:} \in \mathbb{R}^n \) is defined as
\begin{equation}
\left[ \frac{\partial {K}_{\mathrm{T}\text{-}\mathrm{T}}}{\partial \mathbf{z}_k} \right]_{j,l,:}
\bigg|_{\mathbf{Z} = \hat{\mathbf{X}}}
=
\begin{cases}
\left( \frac{\partial k(\hat{\mathbf{x}}_j, \mathbf{z})}{\partial \mathbf{z}} \big|_{\mathbf{z} = \hat{\mathbf{x}}_k} \right)^\top, & l = k, \, j \neq k, \\
\left( \frac{\partial k(\mathbf{x}, \hat{\mathbf{x}}_l)}{\partial \mathbf{x}} \big|_{\mathbf{x} = \hat{\mathbf{x}}_k} \right)^\top, & j = k, \, l \neq k, \\
\mathbf{0}_n^\top, & \text{otherwise}.
\end{cases}
\end{equation}

\section{Gradient of the GP Mean and Covariance}

The Jacobians of the GP mean function $\mathbf{m}$ and covariance function $\mathbf{S}$ with respect to the training location $\mathbf{z}_i$, evaluated at $\hat{\mathbf{X}}$, define the terms $\mathbf{J}_M^i$ and $\mathbf{J}_{\mathcal{S}}^i$ in the Taylor expansion of the corrected moments in \eqref{eq:mean_update} and \eqref{eq:cov_update}. These Jacobians are simplified by exploiting the sparsity of the kernel matrix gradients, which are non-zero only in the $i$-th row or column.

\subsection{Gradient of the GP Mean Function}
The Jacobian of $\mathbf{m}$ as defined in \eqref{eq:mean_function_def} with respect to the $i$-th training location $\mathbf{z}_i$, evaluated at $\mathbf{Z} = \hat{\mathbf{X}}$, is $\mathbf{J}_M^i \in \mathbb{R}^{M \times n}$.
Let $\mathbf{K}(\mathbf{Z}) = \mathbf{K}_{\mathrm{T}-\mathrm{T}}(\mathbf{Z}) + \sigma_y^2 \mathbf{I}_\mathrm{T}$, and define $\mathbf{K} \coloneqq \mathbf{K}(\hat{\mathbf{X}})$ and $\mathbf{K}_{e-\mathrm{T}} \coloneqq \mathbf{K}_{e-\mathrm{T}}(\hat{\mathbf{X}})$ for brevity.

Applying the chain rule and the matrix inverse identity, $\frac{\partial \mathbf{K}(\mathbf{Z})^{-1}}{\partial \mathbf{z}_i} = -\mathbf{K}(\mathbf{Z})^{-1} \frac{\partial \mathbf{K}(\mathbf{Z})}{\partial \mathbf{z}_i} \mathbf{K}(\mathbf{Z})^{-1}$ yields
\begin{multline}
\mathbf{J}_M^i
= \frac{\partial \mathbf{m}}{\partial \mathbf{z}_i} \bigg|_{\mathbf{Z}=\hat{\mathbf{X}}} 
= \left( \frac{\partial \mathbf{K}_{e-\mathrm{T}}}{\partial \mathbf{z}_i} \bigg|_{\hat{\mathbf{X}}} \right) 
\mathbf{K}^{-1} \mathbf{Y}
\\-
\mathbf{K}_{e-\mathrm{T}} \mathbf{K}^{-1}
\left( \frac{\partial \mathbf{K}_{\mathrm{T}-\mathrm{T}}}{\partial \mathbf{z}_i} \bigg|_{\hat{\mathbf{X}}} \right) 
\mathbf{K}^{-1} \mathbf{Y}.
\end{multline}
Let $\mathbf{c} \coloneqq \mathbf{K}^{-1} \mathbf{Y} \in \mathbb{R}^{T \times 1}$ and $\mathbf{P} \coloneqq \mathbf{K}_{e-\mathrm{T}} \mathbf{K}^{-1} \in \mathbb{R}^{M \times T}$. Let $\mathbf{J}_{e-T}^i \coloneqq \frac{\partial \mathbf{K}_{e-\mathrm{T}}}{\partial \mathbf{z}_i}\big|_{\hat{\mathbf{X}}} \in \mathbb{R}^{M \times T \times n}$ and $\mathbf{J}_{T-T}^i \coloneqq \frac{\partial \mathbf{K}_{\mathrm{T}-\mathrm{T}}}{\partial \mathbf{z}_i}\big|_{\hat{\mathbf{X}}} \in \mathbb{R}^{T \times T \times n}$.
Using the sparsity of $\mathbf{J}_{e-T}^i$ (non-zero only at column $i$), the $\mathbf{J}_M^i$ Jacobian simplifies to
\begin{equation}\label{eq:mean_first_deri}
\mathbf{J}_M^i = \mathbf{J}_{e-T}^i[\cdot, i, \cdot] \mathbf{c}[i] - \mathbf{P} \mathbf{J}_{T-T}^i \mathbf{c},
\end{equation}
where $\mathbf{J}_{e-T}^i[\cdot, i, \cdot] \in \mathbb{R}^{M \times n}$ is the $i$-th slice of the kernel gradient tensor, $\mathbf{c}[i]$ is the $i$-th element of $\mathbf{c}$, and the second term denotes a full tensor contraction resulting in an $M \times n$ matrix \cite{muzaffar.SYNGE.SCHILD1949}.

\subsection{Gradient of the GP Covariance Function}
The Jacobian of $\mathbf{S}$ as defined in \eqref{eq:cov_function_def} with respect to $\mathbf{z}_i$, evaluated at $\mathbf{Z} = \hat{\mathbf{X}}$, is $\mathbf{J}_{\mathcal{S}}^i \in \mathbb{R}^{M \times M \times n}$. Since $K_{e-e}$ is constant, its gradient is zero. Let $\mathbf{A}(\mathbf{Z}) = \mathbf{K}_{e-\mathrm{T}}(\mathbf{Z}) \in \mathbb{R}^{M \times T}$ and $\mathbf{A} \coloneqq \mathbf{A}(\hat{\mathbf{X}})$. Applying the product rule for matrix gradients yields
\begin{multline} \label{eq:cov_jacobian_expanded}
\mathbf{J}_{\mathcal{S}}^i
= \frac{\partial \mathbf{S}}{\partial \mathbf{z}_i} \bigg|_{\mathbf{Z}=\hat{\mathbf{X}}} 
= - \left( \frac{\partial \mathbf{A}}{\partial \mathbf{z}_i} \right) \mathbf{K}^{-1} \mathbf{A}^{\top} 
\\- \mathbf{A} \left( \frac{\partial \mathbf{K}^{-1}}{\partial \mathbf{z}_i} \right) \mathbf{A}^{\top} 
- \mathbf{A} \mathbf{K}^{-1} \left( \frac{\partial \mathbf{A}^{\top}}{\partial \mathbf{z}_i} \right).
\end{multline}
Using the identity $\frac{\partial \mathbf{K}^{-1}}{\partial \mathbf{z}_i} = -\mathbf{K}^{-1} \frac{\partial \mathbf{K}}{\partial \mathbf{z}_i} \mathbf{K}^{-1}$, and noting that the first and third terms are transposes of each other, the Jacobian $\mathbf{J}_{\mathcal{S}}^i$ is simplified. Let $\mathbf{P} \coloneqq \mathbf{A} \mathbf{K}^{-1} \in \mathbb{R}^{M \times T}$, $\mathbf{J}_{\mathbf{A}}^i \coloneqq \frac{\partial \mathbf{A}}{\partial \mathbf{z}_i}\big|_{\hat{\mathbf{X}}} \in \mathbb{R}^{M \times T \times n}$ and $\mathbf{J}_{\mathbf{K}}^i \coloneqq \frac{\partial \mathbf{K}}{\partial \mathbf{z}_i}\big|_{\hat{\mathbf{X}}} \in \mathbb{R}^{T \times T \times n}$.
\begin{equation}\label{eq:cov_step_vector}
\mathbf{J}_{\mathcal{S}}^i
= -
\left( \mathbf{J}_{\mathbf{A}}^i \mathbf{K}^{-1} \mathbf{A}^{\top} \right) - \left( \mathbf{J}_{\mathbf{A}}^i \mathbf{K}^{-1} \mathbf{A}^{\top} \right)^{\top}
+ \mathbf{P} \mathbf{J}_{\mathbf{K}}^i \mathbf{P}^{\top},
\end{equation}
where $\mathbf{J}{\mathbf{A}}^i$ and $\mathbf{J}{\mathbf{K}}^i$ denote the kernel gradient tensors, which are sparse and have non-zero entries only in column $i$ for $\mathbf{J}{\mathbf{A}}^i$ and in row and column $i$ for $\mathbf{J}{\mathbf{K}}^i$. The term $\mathbf{P} \mathbf{J}_{\mathbf{K}}^i \mathbf{P}^{\top}$ represents a full tensor contraction, yielding an output tensor of dimension $M \times M \times n$.

\section{Offline Computation of GP Gradients for Efficient Online Updates}\label{sec:section4}
The analytical structure of the GP moments reveals a critical property: the dependence on the measurement vector \( \mathbf{Y} \) is separable from the dependence on the planned measurement locations \( \hat{\mathbf{X}} \). This separation is key to avoiding the \( \mathcal{O}(T^3) \) computational bottleneck of a full GP retraining when the measurement data set is corrected.

\subsection{Decomposition and Precomputation of Gradients}
The mean function $\mathbf{m}(\mathbf{Z})$ and its evaluation $\hat{\mathcal{M}} = \mathbf{K}_{e-\mathrm{T}} \mathbf{K}^{-1} \mathbf{Y}$, is linear with respect to the measurement vector \( \mathbf{Y} \). Consequently, all of its derivatives with respect to the training locations, including the Jacobian \( \mathbf{J}_M^i \) and Hessian \( \mathbf{H}_{M}^{i,j} \), are also linear in \( \mathbf{Y} \).

In contrast, the GP covariance function
as defined in \eqref{eq:cov_function_def}, is independent of \( \mathbf{Y} \). All of its derivatives, such as \( \mathbf{J}_{\mathcal{S}}^i \) and \( \mathbf{H}_{\mathcal{S}}^{i,j} \) dependent only on training locations \( \hat{\mathbf{X}} \), test locations \( \mathbf{X}_e \), and kernel hyperparameters.

Leveraging this fact, a two-stage framework is proposed. The computationally expensive, location-dependent structural components of the Jacobians and Hessians are precomputed offline, using automatic differentiation. The final derivatives can then be assembled online by linearly combining these precomputed tensors with the measurement vector \( \mathbf{Y} \) as needed, enabling real-time model correction.

\subsection{Mean Function Update}
The Jacobian \( \mathbf{J}_M^i \in \mathbb{R}^{M \times n} \) can be expressed as a tensor contraction $\mathbf{J}_M^i = \mathbf{F}^i(\hat{\mathbf{X}}, \mathbf{X}_e) \mathbf{Y}$, where \( \mathbf{F}^i \in \mathbb{R}^{M \times n \times T} \) is the \( \mathbf{Y} \)-independent structural tensor that is precomputed offline. Similarly, the Hessian \( \mathbf{H}_M^{i,j} \in \mathbb{R}^{M \times n \times n} \) is given by $\mathbf{H}_M^{i,j} = \mathbf{G}^{i,j}(\hat{\mathbf{X}}, \mathbf{X}_e) \mathbf{Y}$, where \( \mathbf{G}^{i,j} \in \mathbb{R}^{M \times n \times n \times T} \) is the corresponding precomputed fourth-order tensor.

By substituting these tensors into the original Taylor series expansion in \eqref{eq:mean_update} and \eqref{eq:cov_update}, we get
\begin{multline}
\mathcal{M} \coloneqq \hat{\mathcal{M}} + \sum_{i=1}^{T} \left( \mathbf{F}^i \mathbf{Y} \right) \delta_i \\
+ \frac{1}{2} \sum_{i=1}^{T} \sum_{j=1}^{T}
\scalebox{0.95}{$\begin{bmatrix}
\delta_i^\top \left( \mathbf{G}^{i,j} \mathbf{Y} \right)[1,\cdot,\cdot] \delta_j \\
\delta_i^\top \left( \mathbf{G}^{i,j} \mathbf{Y} \right)[2,\cdot,\cdot] \delta_j \\
\vdots \\
\delta_i^\top \left( \mathbf{G}^{i,j} \mathbf{Y} \right)[M,\cdot,\cdot] \delta_j
\end{bmatrix}$}.
\end{multline}
As a result, the online computation only involves tensor contractions between the precomputed structural components (\( \mathbf{F}^i \), \( \mathbf{G}^{i,j} \)) and the measurement vector \( \mathbf{Y} \) and the perturbation vectors $\{\delta_i\}_{i=1}^T$.

\subsection{Covariance Function Update}
An analogous update is performed for the covariance function \( \mathbf{S} \), whose value and derivatives, including the Jacobian \( \mathbf{J}_{\mathcal{S}}^i \in \mathbb{R}^{M \times M \times n} \) and the cross-Hessian \( \mathbf{H}_{\mathcal{S}}^{i,j} \in \mathbb{R}^{M \times M \times n \times n} \) are independent of the measurement vector \( \mathbf{Y} \). These tensors depend solely on the planned measurement locations \( \hat{\mathbf{X}} \) and the test locations \( \mathbf{X}_e \), and can be precomputed and stored offline.

The online correction of the covariance matrix \( \mathcal{S} \) is efficiently performed by assembling these precomputed tensors with the localization errors \( \{ \delta_i \}_{i=1}^T \). Using a second-order Taylor expansion, the corrected covariance is given by
\begin{equation}
\mathcal{S} \coloneqq \hat{\mathcal{S}} + \sum_{i=1}^{T} \mathbf{J}_{\mathcal{S}}^i \delta_i 
+ \frac{1}{2} \sum_{i=1}^{T} \sum_{j=1}^{T} \left( \delta_i^\top \mathbf{H}_{\mathcal{S}}^{i,j} \delta_j \right),
\end{equation}
where \( \hat{\mathcal{S}} \) is the covariance computed using the planned locations \( \hat{\mathbf{X}} \). The term \( \mathbf{P} \mathbf{J}_{\mathbf{K}}^i \mathbf{P}^\top \) denotes a full tensor contraction, resulting in an output tensor of dimension \( M \times M \times n \),
where the terms \( \mathbf{J}_{\mathcal{S}}^i \delta_i \) and \( \delta_i^\top \mathbf{H}_{\mathcal{S}}^{i,j} \delta_j \) denote the tensor contractions required to produce the \( M \times M \) correction matrices.

By precomputing all structural components (\( \mathbf{F}^i, \mathbf{G}^{i,j}, \mathbf{J}_{\mathcal{S}}^i, \mathbf{H}_{\mathcal{S}}^{i,j} \)), online evaluation of both \( \mathcal{M} \) and \( \mathcal{S} \) is reduced to a series of tensor products, bypassing full GP recomputation.

\subsection{Computational Complexity Analysis}
A complete GP retraining, which serves as the baseline for comparison, has a computational cost dominated by the inversion of the $T \times T$ covariance matrix, requiring $\mathcal{O}(T^3)$ operations. The developed correction framework avoids this bottleneck by precomputing the Jacobian and Hessian tensors offline. 

In the online phase, the first-order mean correction has complexity $\mathcal{O}(T M n)$, while the second-order mean correction involving the Hessian term is $\mathcal{O}(T^2 M n^2)$. For the covariance update, the first-order term costs $\mathcal{O}(T M^2 n)$, and the second-order term costs $\mathcal{O}(T^2 M^2 n^2)$. Therefore, the total worst-case cost is $\mathcal{O}(T^2 M^2 n^2)$. Since the GP derivative tensors are highly sparse, the actual computational cost is expected to be substantially lower.

\begin{remark}
Note that the above complexity represents the case when all $T$ measurement locations are corrected simultaneously. In practical scenarios such as robotic SLAM or loop closure, only a small subset $K \ll T$ of measurement locations require correction. Thus, the online complexity reduces to $\mathcal{O}(K M^2 n + K^2 M^2 n^2)$, which is more efficient than full retraining or the worst-case $\mathcal{O}(T^2 M^2 n^2)$.
\end{remark}

\begin{algorithm}
\caption{GP Moment Correction under Deterministic Measurement Location Errors}
\label{alg:gp_update_algorithm_corrected}
\begin{algorithmic}[1]
\State \textbf{Input}: Planned locations $\mathbf{\hat{X}} \in \mathbb{R}^{T \times n}$, Actual measurements $\mathbf{Y} \in \mathbb{R}^{T \times 1}$, Test locations $\mathbf{X}_e \in \mathbb{R}^{M \times n}$, GP hyperparameters $\alpha, \beta$
\State \textbf{Offline Phase}:
\State \quad Compute and store gradient tensors
\State \quad Compute initial kernel matrices: $K_{\mathrm{T}-\mathrm{T}}$, $K_{e-\mathrm{T}}$, $K_{e-e}$
\State \quad Compute gradient operators: $\{\mathbf{F}^i\}_{i=1}^T$, $\{\mathbf{G}^{i,j}\}_{i,j=1}^T$, $\{\mathbf{J}_{\mathcal{S}}^i\}_{i=1}^T$, $\{\mathbf{H}_{\mathcal{S}}^{i,j}\}_{i,j=1}^T$

\State \textbf{Online Phase}:
\State \textbf{Input}: Full set of location perturbations $\{{\delta}_i\}_{i=1}^T$
\State \quad Compute initial GP moments: $\hat{\mathcal{M}}$, $\hat{\mathcal{S}}$
\State \quad Instantiate $\mathbf{Y}$-dependent Jacobians/Hessians using stored operators $\mathbf{F}^i, \mathbf{G}^{i,j}$ and current $\mathbf{Y}$.
\State \quad $\mathbf{J}_{M}^i \leftarrow \mathbf{F}^i \mathbf{Y}$ (for all $i=1,\dots,T$)
\State \quad $\mathbf{H}_{M}^{i,j} \leftarrow \mathbf{G}^{i,j} \mathbf{Y}$ (for all $i,j=1,\dots,T$)
\State \quad Update $\mathcal{M}$ using $\mathbf{J}_{M}^i, \mathbf{H}_{M}^{i,j}$ and $\{{\delta}_i\}_{i=1}^T$ (Eq. \ref{eq:mean_update})
\State \quad Update $\mathcal{S}$ using $\mathbf{J}_{\mathcal{S}}^i, \mathbf{H}_{\mathcal{S}}^{i,j}$ and $\{{\delta}_i\}_{i=1}^T$ (Eq. \ref{eq:cov_update})
\State \textbf{Output}: $\mathcal{M}$, $\mathcal{S}$
\end{algorithmic}
\end{algorithm}

\section{Convergence Analysis}
In the preceding sections, the GP mean $\mathbf{m}$  was defined as a function of a set of training locations $\mathbf{Z} = \{\mathbf{z}_i\}_{i=1}^T$. For the convergence analysis presented in this section, the mean function is treated as a function that operates on a single stacked vector in a high-dimensional Euclidean space. To facilitate the analysis, let $\boldsymbol{z} \in \mathbb{R}^{nT}$ be the stacked vector of all training locations, defined as $\boldsymbol{z} = \text{vec}(\mathbf{z}_1, \dots, \mathbf{z}_T)$. The domain of all possible training locations is assumed to lie in the compact set $\mathcal{D} \subset \mathbb{R}^{nT}$. Also, let stacked vector of planned locations be $\hat{\boldsymbol{x}} = [\hat{\mathbf{x}}_1, \dots, \hat{\mathbf{x}}_T] \in \mathbb{R}^{nT}$, and the stacked vector of actual locations is $\boldsymbol{x} = [\mathbf{x}_1, \dots, \mathbf{x}_T] \in \mathbb{R}^{nT}$. The total perturbation is then the single vector $\boldsymbol{\delta} = \boldsymbol{x} - \hat{\boldsymbol{x}} \in \mathbb{R}^{nT}$.

The following theorem demonstrates that the mean function is analytic, which is the necessary condition for a Taylor series to converge.

\begin{theorem}\label{thm:TaylorSeries}
Given the GP mean function $\mathbf{m}: \mathcal{D} \to \mathbb{R}^M$, if the kernel function $k$ is analytic on the domain $\mathcal{X} \subset \mathbb{R}^n$, then the mean function $\mathbf{m}$ is also analytic on $\mathcal{D}$. The true mean value $\mathcal{M} \coloneqq \mathbf{m}(\boldsymbol{x})$ can be represented as a multivariate Taylor series expansion around the point $\hat{\boldsymbol{x}}$, with perturbation $\boldsymbol{\delta} = \boldsymbol{x} - \hat{\boldsymbol{x}}$, as
\begin{equation}
 \mathcal{M} = \sum_{N=0}^{\infty} \frac{1}{N!} \nabla^N \mathbf{m}(\hat{\boldsymbol{x}}) \,[\boldsymbol{\delta}^N],
\end{equation}
where $\nabla^N \mathbf{m}(\hat{\boldsymbol{x}})$ is the $N$-th order gradient tensor of $\mathbf{m}$ at $\hat{\boldsymbol{x}}$, and $[\boldsymbol{\delta}^N]$ denotes the $N$-fold tensor application of the perturbation vector $\boldsymbol{\delta}$. Furthermore, the Taylor series converges uniformly to $\mathbf{m}$ on any compact subset of $\mathcal{D}$.
\end{theorem}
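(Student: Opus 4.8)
The plan is to establish analyticity of $\mathbf{m}$ from the closure properties of real analytic functions under addition, multiplication, and composition, and then invoke the standard characterization of real analytic maps to obtain the Taylor representation and its uniform convergence. Writing $\mathbf{m}(\boldsymbol{z}) = K_{e-\mathrm{T}}(\boldsymbol{z})\,\mathbf{K}(\boldsymbol{z})^{-1}\mathbf{Y}$ with $\mathbf{K}(\boldsymbol{z}) = K_{\mathrm{T}-\mathrm{T}}(\boldsymbol{z}) + \sigma_y^2 \mathbf{I}_T$, the map decomposes into three building blocks: the entries of the kernel matrices, the matrix inverse $\mathbf{K}(\boldsymbol{z})^{-1}$, and the matrix–vector products that assemble them. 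Since finite sums and products of real analytic scalar functions are again real analytic, the only substantive work is to show that every entry of $K_{e-\mathrm{T}}$, $K_{\mathrm{T}-\mathrm{T}}$, and $\mathbf{K}^{-1}$ is real analytic in the stacked variable $\boldsymbol{z}$.

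First I would treat the kernel-matrix entries. Each entry of $K_{e-\mathrm{T}}(\boldsymbol{z})$ and $K_{\mathrm{T}-\mathrm{T}}(\boldsymbol{z})$ has the form $k(\mathbf{x}_{e,i},\mathbf{z}_k)$ or $k(\mathbf{z}_j,\mathbf{z}_k)$, i.e., the analytic kernel $k$ composed with the coordinate projections $\boldsymbol{z}\mapsto \mathbf{z}_k$, which are linear and hence analytic. Because the composition of real analytic maps is real analytic, every entry of both kernel matrices is analytic on $\mathcal{D}$.

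The \emph{crux} is analyticity of the entries of $\mathbf{K}(\boldsymbol{z})^{-1}$. Since $k$ is a valid covariance kernel, the Gram matrix $K_{\mathrm{T}-\mathrm{T}}(\boldsymbol{z})$ is symmetric positive semidefinite, so its eigenvalues are nonnegative and $\mathbf{K}(\boldsymbol{z}) = K_{\mathrm{T}-\mathrm{T}}(\boldsymbol{z}) + \sigma_y^2\mathbf{I}_T$ has eigenvalues bounded below by $\sigma_y^2 > 0$. Hence $\mathbf{K}(\boldsymbol{z})$ is positive definite and $\det\mathbf{K}(\boldsymbol{z}) > 0$ for every $\boldsymbol{z}\in\mathcal{D}$. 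By Cramer's rule, $\mathbf{K}^{-1} = (\det\mathbf{K})^{-1}\,\adj(\mathbf{K})$, where the entries of $\adj(\mathbf{K})$ are signed cofactors, i.e., polynomials in the entries of $\mathbf{K}$, and therefore analytic by the previous step. A quotient of real analytic functions with a nowhere-vanishing denominator is real analytic, so each entry of $\mathbf{K}^{-1}$ is analytic on $\mathcal{D}$. Combining this with the analyticity of $K_{e-\mathrm{T}}$ and the closure of analytic functions under finite sums and products shows that every component of $\mathbf{m}(\boldsymbol{z}) = K_{e-\mathrm{T}}(\boldsymbol{z})\,\mathbf{K}(\boldsymbol{z})^{-1}\mathbf{Y}$ is real analytic on $\mathcal{D}$.

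Finally, the Taylor representation and convergence follow from the definition of real analyticity. A real analytic map equals its multivariate Taylor series in a neighborhood of each point of its domain; applying this at $\hat{\boldsymbol{x}}$ yields $\mathcal{M} = \mathbf{m}(\boldsymbol{x}) = \sum_{N=0}^{\infty}\frac{1}{N!}\nabla^N\mathbf{m}(\hat{\boldsymbol{x}})[\boldsymbol{\delta}^N]$ for every $\boldsymbol{x}$ inside the ball of convergence about $\hat{\boldsymbol{x}}$, with $\boldsymbol{\delta} = \boldsymbol{x}-\hat{\boldsymbol{x}}$. Because a convergent multivariate power series converges absolutely and uniformly on every compact subset strictly contained in its region of convergence, the series converges uniformly to $\mathbf{m}$ on such compact subsets of $\mathcal{D}$. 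The main obstacle is precisely the inverse step: one must certify analyticity of $\boldsymbol{z}\mapsto\mathbf{K}(\boldsymbol{z})^{-1}$ on all of $\mathcal{D}$, which rests entirely on the positive-definiteness argument guaranteeing $\det\mathbf{K}(\boldsymbol{z})\neq 0$ everywhere so that the Cramer-rule quotient is well defined and analytic; the remaining closure and convergence arguments are routine consequences of standard real analyticity theory.
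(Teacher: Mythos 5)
Your proposal is correct, and its skeleton matches the paper's proof: kernel analyticity passes to the kernel-matrix entries, analyticity is preserved under matrix addition, multiplication, and inversion, and the Taylor representation with uniform convergence then follows from real analyticity. However, you are more rigorous than the paper at exactly the two points where its argument is loose. First, the paper justifies the inversion step only by remarking that the inverse is ``a rational function of the matrix entries,'' which silently presumes the denominator never vanishes; you supply the missing certificate, namely that $K_{\mathrm{T}-\mathrm{T}}(\boldsymbol{z})$ is positive semidefinite, so $\mathbf{K}(\boldsymbol{z}) = K_{\mathrm{T}-\mathrm{T}}(\boldsymbol{z}) + \sigma_y^2\mathbf{I}_T$ has spectrum bounded below by $\sigma_y^2 > 0$, hence $\det\mathbf{K}(\boldsymbol{z}) > 0$ on all of $\mathcal{D}$ and the Cramer's-rule expression $(\det\mathbf{K})^{-1}\adj(\mathbf{K})$ is entrywise analytic. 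This is the genuine crux, and your proof is stronger for making it explicit. Second, for convergence the paper bounds the Taylor remainder by $M_{N+1}L_f^{N+1}/(N+1)!$ and argues the factorial wins; this is delicate because for a general analytic function the derivative bounds $M_{N+1}$ can themselves grow factorially (indeed, $M_N \le C R^{-N} N!$ is precisely the analytic-function derivative estimate), so the paper's argument as stated only works if the perturbation is small relative to the radius governed by those constants. Your route---invoking the standard fact that a convergent multivariate power series converges absolutely and uniformly on compact subsets strictly inside its region of convergence---avoids this issue, though note that both your argument and the paper's actually deliver uniform convergence only on compacta within the domain of convergence of the expansion about $\hat{\boldsymbol{x}}$, not on arbitrary compact subsets of $\mathcal{D}$ as the theorem statement literally claims; you flag this restriction correctly, the paper does not.
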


\begin{proof}
The GP mean function $\mathbf{m}(\boldsymbol{z})$ is defined by the expression in \eqref{eq:mean_function_def}. Since $k$ is analytic on $\mathcal{X}$, each entry in the kernel matrices $\mathbf{K}_{e-\mathrm{T}}$ and $\mathbf{K}_{\mathrm{T}-\mathrm{T}}$ is an analytic function of the components of $\boldsymbol{z}$. The operations of matrix addition, matrix multiplication, and matrix inversion (which is a rational function of the matrix entries) all preserve analyticity. Consequently, the composite function $\mathbf{m}(\boldsymbol{z})$ is real analytic on its domain $\mathcal{D}$.

By the definition of a real analytic function, $\mathbf{m}(\boldsymbol{z})$ can be represented by its Taylor series expansion around any point $\hat{\boldsymbol{x}} \in \mathcal{D}$. For any compact subset $\mathcal{V} \subset \mathcal{D}$, the perturbation $\boldsymbol{\delta} = \boldsymbol{x} - \hat{\boldsymbol{x}}$ is uniformly bounded, i.e., $\|\boldsymbol{x} - \hat{\boldsymbol{x}}\| \leq L_f$ for all $\boldsymbol{x} \in \mathcal{V}$. By the multivariate Taylor theorem, the remainder term $R_N(\boldsymbol{x}) = \mathbf{m}(\boldsymbol{x}) - T_N(\boldsymbol{x})$, where $T_N$ is the $N$-th order Taylor polynomial, satisfies
\begin{equation}
 \|R_N(\boldsymbol{x})\| \leq \frac{M_{N+1} \, L_f^{N+1}}{(N+1)!},
\end{equation}
where $M_{N+1} > 0$ bounds the $(N+1)$-th order gradient tensor on $\mathcal{V}$. Since $(N+1)!$ grows faster than $L_f^{N+1}$, the remainder $\|R_N(\boldsymbol{x})\| \to 0$ as $N \to \infty$ (see \cite[Theorem~5.15]{muzaffar.Rudin1976}). 
\end{proof}

Theorem~\ref{thm:TaylorSeries} proves that the mean function $\mathbf{m}$ is infinitely differentiable and can be represented by a convergent Taylor series. Note that Assumption \ref{assump:2} implies a bound on the total perturbation vector $\boldsymbol{\delta} = \boldsymbol{x} - \hat{\boldsymbol{x}}$. Specifically, if $\|\delta_i\| \leq \delta_{\max}$ for all $i=1,\dots,T$, then the $L_2$-norm of the stacked vector $\boldsymbol{\delta}$ is bounded by
\begin{equation}\label{eq:deltaBound}
\|\boldsymbol{\delta}\| \leq \sqrt{\sum_{i=1}^T (\delta_{\max})^2} = \delta_{\max} \sqrt{T}.
\end{equation}
 Building on Theorem~\ref{thm:TaylorSeries} and using the bound in \eqref{eq:deltaBound}, the following theorem presents bounds on the required number of higher-order gradients
to achieve the desired approximation accuracy within a
compact subset of the input domain.

\begin{theorem}\label{thm:GradientCount}
Let $\mathcal{V} \subset \mathcal{D}$ be compact, with $\| \boldsymbol{x} - \hat{\boldsymbol{x}} \| \leq \beta$ for all $\boldsymbol{x} \in \mathcal{V}$, where $\beta \coloneqq \delta_{\max}\sqrt{T}$. Given a desired accuracy $\epsilon > 0$, the minimum order $N$ of the Taylor polynomial $T_N(\boldsymbol{x})$ required to guarantee
\begin{equation}
 \| \mathbf{m}(\boldsymbol{x}) - T_N(\boldsymbol{x}) \| \leq \epsilon, \quad \forall \boldsymbol{x} \in \mathcal{V},
\end{equation}
is the smallest integer $N$ satisfying $\frac{M_{N+1} \, \beta^{N+1}}{(N+1)!} \leq \epsilon$, where $M_{N+1} \ge \sup_{\boldsymbol{\kappa} \in \mathcal{V}} \| \nabla^{N+1} \mathbf{m}(\boldsymbol{\kappa}) \|$ bounds the $(N+1)$-th order gradient tensor.
\end{theorem}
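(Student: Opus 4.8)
The plan is to derive this result directly from the remainder bound already established inside the proof of Theorem~\ref{thm:TaylorSeries}, treating the present statement as a quantitative corollary that converts the qualitative convergence $\|R_N\| \to 0$ into an explicit, computable truncation criterion. No new analytic machinery is needed; the work is to instantiate the bound on $\mathcal{V}$, verify the relevant quantities are finite, and invoke well-ordering.

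First I would instantiate the remainder estimate from Theorem~\ref{thm:TaylorSeries} on the compact set $\mathcal{V}$. Since $\|\boldsymbol{x} - \hat{\boldsymbol{x}}\| \leq \beta$ with $\beta = \delta_{\max}\sqrt{T}$ by \eqref{eq:deltaBound}, the perturbation norm plays the role of the uniform bound $L_f$, so the multivariate Taylor theorem yields, for every $\boldsymbol{x} \in \mathcal{V}$,
\begin{equation}
\|\mathbf{m}(\boldsymbol{x}) - T_N(\boldsymbol{x})\| = \|R_N(\boldsymbol{x})\| \leq \frac{M_{N+1}\,\beta^{N+1}}{(N+1)!},
\end{equation}
where $M_{N+1} \geq \sup_{\boldsymbol{\kappa}\in\mathcal{V}}\|\nabla^{N+1}\mathbf{m}(\boldsymbol{\kappa})\|$. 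I would note at the outset that $M_{N+1}$ is finite for each $N$: because $\mathbf{m}$ is analytic on $\mathcal{D}$ by Theorem~\ref{thm:TaylorSeries}, it is $C^\infty$, so each gradient tensor $\nabla^{N+1}\mathbf{m}$ is continuous and attains a finite supremum over the compact set $\mathcal{V}$. This guarantees the bounding quantity is well-defined.

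Second, for the sufficiency direction I would observe that any integer $N$ satisfying $\frac{M_{N+1}\beta^{N+1}}{(N+1)!}\leq\epsilon$ immediately gives $\|\mathbf{m}(\boldsymbol{x}) - T_N(\boldsymbol{x})\| \leq \epsilon$ uniformly on $\mathcal{V}$ through the displayed estimate, so the stated inequality is a valid certificate for the desired accuracy. To guarantee that a \emph{minimum} such integer exists, I would show the qualifying set is nonempty by reusing the factorial-versus-exponential argument from Theorem~\ref{thm:TaylorSeries}: since $(N+1)!$ grows faster than $\beta^{N+1}$ and $M_{N+1}$ admits at most a controlled growth (for an analytic function one has a bound of the form $M_{N+1}\leq C\,\rho^{N+1}(N+1)!$ on a suitable neighborhood, which still forces $\frac{M_{N+1}\beta^{N+1}}{(N+1)!}\to 0$ once $\beta$ is inside the radius of convergence), the bound falls below $\epsilon$ for all sufficiently large $N$. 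Thus the set of integers meeting the inequality is a nonempty subset of $\mathbb{N}$, and by the well-ordering principle it has a least element, which is the asserted minimum order.

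The main subtlety---rather than a deep obstacle---is the interpretation of ``minimum'' together with the possible non-monotonicity of $N \mapsto \frac{M_{N+1}\beta^{N+1}}{(N+1)!}$. Because $M_{N+1}$ is only an upper bound on the derivative-tensor norm and may fluctuate with $N$, the estimate is a conservative certificate, so the genuinely smallest order achieving $\epsilon$-accuracy could be strictly smaller. I would therefore state carefully that the theorem identifies the least order \emph{guaranteed by the bound}, not the least order in an absolute sense, and emphasize that the well-ordering argument requires only nonemptiness of the qualifying set, not monotonicity of the bounding sequence. The only place demanding genuine care is justifying that the qualifying set is nonempty, i.e.\ controlling the growth of $M_{N+1}$ so that the ratio still decays; everything else is a direct transcription of the Theorem~\ref{thm:TaylorSeries} remainder bound.
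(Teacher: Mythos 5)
Your proposal is correct and follows essentially the same route as the paper's proof: instantiate the remainder bound inherited from Theorem~\ref{thm:TaylorSeries}, replace $\|\boldsymbol{x}-\hat{\boldsymbol{x}}\|$ by $\beta$, and take the smallest integer $N$ for which $\frac{M_{N+1}\,\beta^{N+1}}{(N+1)!}\leq\epsilon$. Your additional care --- verifying finiteness of $M_{N+1}$ via compactness, establishing nonemptiness of the qualifying set (which genuinely requires controlling the growth of $M_{N+1}$, e.g.\ by Cauchy-type estimates with $\beta$ inside the radius of convergence) before invoking well-ordering, and flagging that the theorem certifies only the least order guaranteed by the bound rather than the absolute least order --- goes beyond the paper's terse argument, which leaves all three of these points implicit.
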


\begin{proof}
From the proof of Theorem~\ref{thm:TaylorSeries}, the remainder $R_N(\boldsymbol{x}) = \mathbf{m}(\boldsymbol{x}) - T_N(\boldsymbol{x})$ is bounded by
$\|R_N(\boldsymbol{x})\| \leq \frac{M_{N+1} \, \|\boldsymbol{x} - \hat{\boldsymbol{x}}\|^{N+1}}{(N+1)!}$. Using the given bound $\|\boldsymbol{x} - \hat{\boldsymbol{x}}\| \leq \beta$, we have
\begin{equation}
 \|R_N(\boldsymbol{x})\| \leq \frac{M_{N+1} \, \beta^{N+1}}{(N+1)!}.
\end{equation}
To satisfy the desired accuracy $\epsilon$, we must choose $N$ large enough such that the remainder bound is less than or equal to $\epsilon$. The minimum $N$ is therefore the smallest integer for which this inequality holds.
\end{proof}

\begin{remark}
Theorem~\ref{thm:GradientCount} formalizes the trade-off between the desired approximation accuracy $\epsilon$ and the Taylor expansion order $N$. Due to the factorial in the denominator, the error bound decreases rapidly with $N$. For small perturbations (i.e., $\beta < 1$), low-order expansions (e.g., $N=2$) often suffice in practice.
\end{remark}

\section{Simulation Results}
Two simulations are conducted to validate the developed method for refining GP models trained on corrupted datasets. The first simulation, performed in a 1D domain, introduces spatially varying errors to measurement locations, simulating random, spatially dependent noise. The second simulation, conducted in a 2D domain, applies a constant offset \( {\delta} \) to all locations, representing a uniform sensor bias across the entire field.

\subsection{1-Dimensional Example}
This simulation considers the function \( f_1(\mathbf{x}) = 2+\sin(2\pi \mathbf{x}) \), with \( \mathbf{x} \in [0,1] \). Measurement locations are uniformly spaced as \( \hat{\mathbf{X}} = \{ 0, 0.1, 0.2, \dots, 1 \} \). A SE kernel is used with hyperparameters \( \alpha = 1 \), \( \beta = 0.1 \). Jacobians and Hessians from Equations~\eqref{eq:mean_first_deri} and~\eqref{eq:cov_step_vector} are computed using the CasADi symbolic framework \cite{Andersson2018}.

GP predictions are evaluated at uniformly distributed query points with $M=100$. A baseline GP model is trained on the original locations and values. Gaussian noise \( \epsilon \sim \mathcal{N}(0, 0.01^2) \) is added to simulate corrupted measurement locations. The corrupted GP model is shown in Figure~\ref{fig:case2_1D}, and the corrected model using gradient-based updates is shown in Figure~\ref{fig:case3_1D}. The error \( \|e\|_{(\cdot)} \) presented in Table~\ref{tab:percentage_improvement_1d} is the \( L_2 \) norm of the error vector computed over all query points \( X_e \) using
\[
\|e\|_{\mathcal{M}} = \sqrt{\sum_{i=1}^{M} \big( f_1(\mathbf{x_i}) - \mathcal{M}_i\big)^2},
\]
where \( \|e\|_{\hat{\mathcal{M}}} \) is computed similarly using corrupted GP model $\mathcal{\hat{M}}$. The absolute value of errors between the corrupted and perfect GP model, and between the corrected and perfect GP model, are compared in Figure~\ref{fig:error_comparison}. A comparison of the aggregate error norms and computational time, averaged over 100 simulations, is summarized in Table~\ref{tab:percentage_improvement_1d} and Table~\ref{tab:time_improvement_2d}, respectively.

\subsection{2-Dimensional Example}
The second simulation considers a scalar field defined on a bounded domain \( \mathcal{X} \subseteq \mathbb{R}^2 \) as
\[
f_2(\mathbf{x}) = \sin(2\pi x) \cos(2\pi y), \quad \mathbf{x} = (x, y) \in [0,1]^2.
\]
The scalar field is visualized in Figure~\ref{fig:D Surface Plot of the GP predicted}. To simulate a uniform sensor bias, the \( x \)-coordinates of all measurement locations are perturbed by a fixed offset \( \delta_i = 0.1 \) as shown in Figure~\ref{fig:D Surface Plot with measurement locations}. A GP model is then trained on these corrupted locations using the true scalar field values. The corrected GP predictions are computed using the gradient-based update mechanism. Error trajectories for the corrupted and corrected mean functions are shown in Figure~\ref{fig:error_trajectories_2D}, demonstrating error reduction after applying the correction.

\pgfplotstableread[col sep=comma]{data/gp_summary_1D.csv}\dataOneD
\pgfplotstableread[col sep=comma]{data/gp_summary_2D.csv}\dataTwoD

\pgfplotsset{
    /pgf/number format/fixed, 
    /pgf/number format/precision=2, 
    /pgf/number format/zerofill,
}

\begin{table}
    \centering
    \caption{Average Error Norm ($||e||$) and Resulting Improvement Percentage from 100 Simulations.}
    \footnotesize
    \resizebox{\columnwidth}{!}{
        \begin{tabular}{lcc}
            \toprule
            \textbf{Improvement} & \textbf{1D Sim} & \textbf{2D Sim} \\
            \midrule
            
            $||e||_{\mathcal{\hat{M}}}$ & 
            \pgfplotstablegetelem{0}{norm_corrupted}\of\dataOneD 
            \pgfmathprintnumber{\pgfplotsretval} & 
            \pgfplotstablegetelem{0}{norm_corrupted}\of\dataTwoD
            \pgfmathprintnumber{\pgfplotsretval} \\ 

            $||e||_{\mathcal{{M}}}$ & 
            \pgfplotstablegetelem{0}{norm_corrected}\of\dataOneD
            \pgfmathprintnumber{\pgfplotsretval} &
            \pgfplotstablegetelem{0}{norm_corrected}\of\dataTwoD
            \pgfmathprintnumber{\pgfplotsretval} \\
            
            Improvement Percentage & 
            \pgfplotstablegetelem{0}{improvement_pct}\of\dataOneD
            \pgfmathprintnumber[precision=2]{\pgfplotsretval} $\%$ & 
            \pgfplotstablegetelem{0}{improvement_pct}\of\dataTwoD
            \pgfmathprintnumber[precision=2]{\pgfplotsretval} $\%$ \\
            
            \bottomrule
        \end{tabular}
    }
    \label{tab:percentage_improvement_1d}
\end{table}

\begin{table}
    \centering
    \caption{Average Computation Time for Correction vs. Full GP Training (100 Simulations).}
    \footnotesize
    \resizebox{\columnwidth}{!}{
        \begin{tabular}{lcc}
            \toprule
            \textbf{Method} & \textbf{Time (secs) 1D} & \textbf{Time (secs) 2D} \\
            \midrule
            
            Full Retraining & 
            \pgfplotstablegetelem{0}{time_retrain}\of\dataOneD
            \pgfmathprintnumber[sci, precision=2]{\pgfplotsretval} &
            \pgfplotstablegetelem{0}{time_retrain}\of\dataTwoD
            \pgfmathprintnumber[sci, precision=2]{\pgfplotsretval} \\

            Gradient Correction &
            \pgfplotstablegetelem{0}{time_correction_mult}\of\dataOneD
            \pgfmathprintnumber[sci, precision=2]{\pgfplotsretval} &
            \pgfplotstablegetelem{0}{time_correction_mult}\of\dataTwoD
            \pgfmathprintnumber[sci, precision=2]{\pgfplotsretval} \\
            
            \bottomrule
        \end{tabular}
    }
    \label{tab:time_improvement_2d}
\end{table}

\begin{figure}
    \centering
    \begin{tikzpicture}
        \begin{axis}[
            width=0.45\textwidth,
            height=0.35\textwidth,
            xlabel={$\mathbf{x}$},
            ylabel={$f_1(\mathbf{x})$},
            grid=both,
            xmin=0,
            xmax=1, 
        ]
            \addplot[
                color=blue,
                mark=none,
                thick,
            ] table [x=X_test, y=y_true, col sep=comma] {data/gp_simulation_results.csv};
            
            \addplot[
                color=red,
                mark=none,
                thick,
            ] table [x=X_test, y=y_pred_case1, col sep=comma] {data/gp_simulation_results.csv};
            
            \addplot [
                name path=upper_case1,
                draw=none,
            ] table [x=X_test, y expr=\thisrow{y_pred_case1} + 2*\thisrow{y_pred_std_case1}, col sep=comma] {data/gp_simulation_results.csv};
            
            \addplot [
                name path=lower_case1,
                draw=none,
            ] table [x=X_test, y expr=\thisrow{y_pred_case1} - 2*\thisrow{y_pred_std_case1}, col sep=comma] {data/gp_simulation_results.csv};
            
            \addplot [
                fill=purple!20,
                opacity=0.5,
            ] fill between [of=upper_case1 and lower_case1];
            
            \addplot[
                only marks,
                mark=x,
                black,
                mark size=3pt,
            ] table [x=Train_X_original, y=Train_Y_original, col sep=comma] {data/gp_simulation_results.csv};
            
            \node[anchor=north east, draw, fill=white, inner sep=1pt, font=\scriptsize] at (rel axis cs: 0.98, 0.98) {
                \begin{tabular}{@{}l@{}}
                    \textcolor{blue}{\rule{8pt}{1.5pt}} \hspace{2pt} True Function \\
                    \textcolor{red}{\rule{8pt}{1.5pt}} \hspace{2pt} Predicted Mean\\
                    \textcolor{purple!50}{\rule[1.5pt]{8pt}{3.5pt}} \hspace{2pt} 2$\sigma$ Region \\
                    \textcolor{black}{\(\times\)} \hspace{2pt} Training Points \\
                \end{tabular}
            };
        \end{axis}
    \end{tikzpicture}
    \caption{Baseline GP model prediction trained on the original, non-corrupted 1D measurement locations. The true function is shown in blue.}
    \label{fig:case1_1D}
\end{figure}
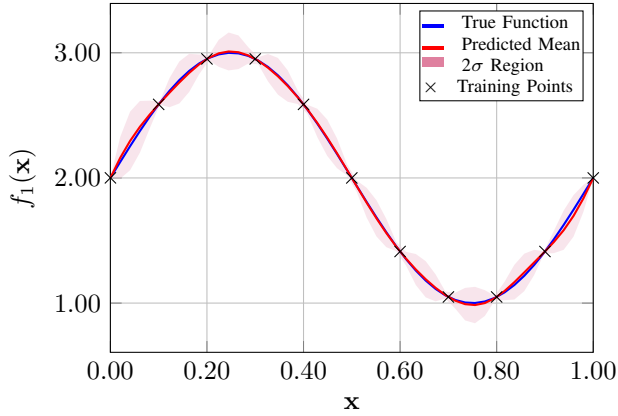

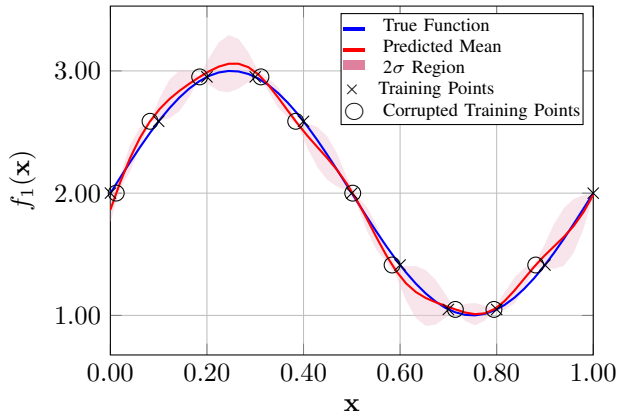
\begin{figure}
    \centering
    \begin{tikzpicture}
        \begin{axis}[
            width=0.45\textwidth,
            height=0.35\textwidth,
            xlabel={$\mathbf{x}$},
            ylabel={$ f_1(\mathbf{x})$},
            legend pos=north east, 
            legend style={font=\small},
            grid=both,
                        xmin=0,
            xmax=1, 
        ]
            \addplot[
                color=blue,
                mark=none,
                thick,
            ] table [x=X_test, y=y_true, col sep=comma] {data/gp_simulation_results.csv};
            
            \addplot[
                color=red,
                mark=none,
                thick,
            ] table [x=X_test, y=y_pred_case2, col sep=comma] {data/gp_simulation_results.csv};
            
            \addplot [
                name path=upper_case2,
                draw=none,
            ] table [x=X_test, y expr=\thisrow{y_pred_case2} + 2*\thisrow{y_pred_std_case2}, col sep=comma] {data/gp_simulation_results.csv};
            
            \addplot [
                name path=lower_case2,
                draw=none,
            ] table [x=X_test, y expr=\thisrow{y_pred_case2} - 2*\thisrow{y_pred_std_case2}, col sep=comma] {data/gp_simulation_results.csv};
            
            \addplot [
                fill=purple!20,
                opacity=0.5,
                forget plot,
            ] fill between [of=upper_case2 and lower_case2];
            
            \addplot[
                only marks,
                mark=x,
                black,
                mark size=3pt,
            ] table [x=Train_X_original, y=Train_Y_corrupted, col sep=comma] {data/gp_simulation_results.csv};

            \addplot[
                only marks,
                mark=o,
                black,
                mark size=3pt,
            ] table [x=Train_X_corrupted, y=Train_Y_corrupted, col sep=comma] {data/gp_simulation_results.csv};
            \node[anchor=north east, draw, fill=white, inner sep=1pt, font=\scriptsize] at (rel axis cs: 0.98, 0.98) {
            \begin{tabular}{@{}l@{}}
                    \textcolor{blue}{\rule{8pt}{1.5pt}} \hspace{2pt} True Function \\
                    \textcolor{red}{\rule{8pt}{1.5pt}} \hspace{2pt} Predicted Mean\\
                    \textcolor{purple!50}{\rule[1.5pt]{8pt}{3.5pt}} \hspace{2pt} 2$\sigma$ Region \\
                    \textcolor{black}{\(\times\)} \hspace{2pt} Training Points \\
                    \textcolor{black}{$\bigcirc$} \hspace{2pt} Corrupted Training Points
                \end{tabular}
            };
        \end{axis}
    \end{tikzpicture}
    \caption{GP model prediction trained on corrupted 1D measurement locations, demonstrating the model shift relative to the true function and original training points.}
    \label{fig:case2_1D}
\end{figure}
\begin{figure}
    \centering
    \begin{tikzpicture}
        \begin{axis}[
            width=0.45\textwidth,
            height=0.35\textwidth,
            xlabel={$\mathbf{x}$},
            ylabel={$ f_1(\mathbf{x})$},
            legend pos=north east, 
            legend style={font=\small},
            grid=both,
                        xmin=0,
            xmax=1, 
        ]
            \addplot[
                color=blue,
                mark=none,
                thick,
            ] table [x=X_test, y=y_true, col sep=comma] {data/gp_simulation_results.csv};
                        
            \addplot[
                color=red,
                mark=none,
                thick,
            ] table [x=X_test, y=y_pred_case3, col sep=comma] {data/gp_simulation_results.csv};
            
            \addplot [
                name path=upper_case3,
                draw=none,
            ] table [x=X_test, y expr=\thisrow{y_pred_case3} + 2*\thisrow{y_pred_std_case3}, col sep=comma] {data/gp_simulation_results.csv};
            
            \addplot [
                name path=lower_case3,
                draw=none,
            ] table [x=X_test, y expr=\thisrow{y_pred_case3} - 2*\thisrow{y_pred_std_case3}, col sep=comma] {data/gp_simulation_results.csv};
            
            \addplot [
                fill=purple!20,
                opacity=0.5,
                forget plot,
            ] fill between [of=upper_case3 and lower_case3];
            
            \addplot[
                only marks,
                mark=x,
                black,
                mark size=3pt,
            ] table [x=Train_X_original, y=Train_Y_original, col sep=comma] {data/gp_simulation_results.csv};
                        
            \node[anchor=north east, draw, fill=white, inner sep=1pt, font=\scriptsize] at (rel axis cs: 0.98, 0.98) {
                \begin{tabular}{@{}l@{}}
                    \textcolor{blue}{\rule{8pt}{1.5pt}} \hspace{2pt} True Function \\
                    \textcolor{red}{\rule{8pt}{1.5pt}} \hspace{2pt} Corrected Mean\\
                    \textcolor{purple!50}{\rule[1.5pt]{8pt}{3.5pt}} \hspace{2pt} 2$\sigma$ Region \\
                \end{tabular}
            };
        \end{axis}
    \end{tikzpicture}
    \caption{Corrected GP prediction for the 1D simulation. The model, updated using the gradient-based correction, shows the mean function (red line) and $2\sigma$ confidence interval realigned with the true function (blue line).}
    \label{fig:case3_1D}
\end{figure}
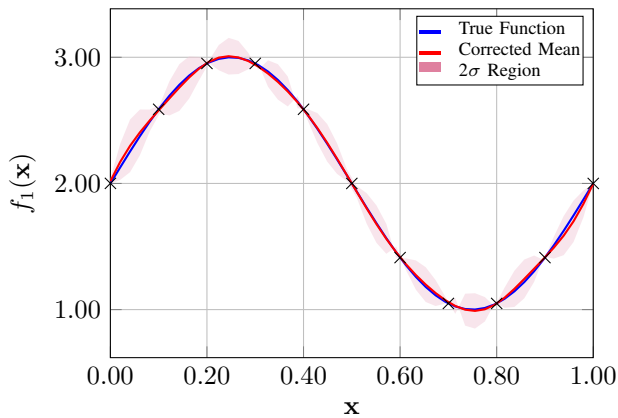
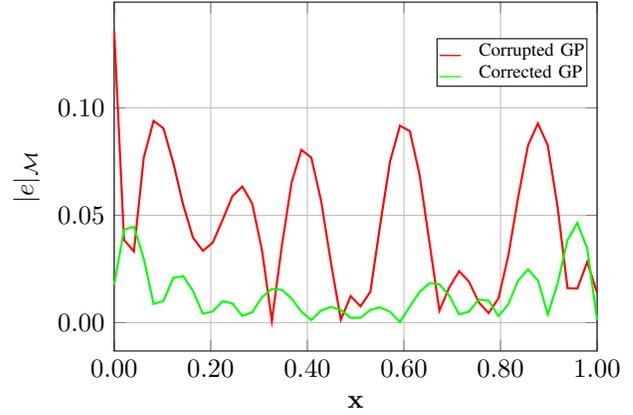
\begin{figure}
    \centering
    \begin{tikzpicture}
        \begin{axis}[
           width=0.45\textwidth,
            height=0.35\textwidth,
            xlabel={$\mathbf{x}$},
            ylabel={$|e|_{\mathcal{M}}$},
            legend pos=north east, legend style={font=\small},
            grid=both,
            xmin=0,
            xmax=1, 
            ]
            \addplot[
                color=red,
                thick,
            ] table [x=X_test, y=error_case2, col sep=comma] {data/gp_simulation_results.csv};
                        
            \addplot[
                color=green,
                thick,
            ] table [x=X_test, y=error_case3, col sep=comma] {data/gp_simulation_results.csv};
        \end{axis}
                    \node[anchor=north east, draw, fill=white, inner sep=1pt, font=\scriptsize] at (rel axis cs: 0.98, 0.98) {
                \begin{tabular}{@{}l@{}}
                    \textcolor{red}{\rule{8pt}{0.5pt}} \hspace{2pt} Corrupted GP \\
                    \textcolor{green}{\rule{8pt}{0.5pt}} \hspace{2pt} Corrected GP 
                \end{tabular}
            };
    \end{tikzpicture}
    \caption{Absolute error of the mean for the 1D simulation, comparing the corrupted GP against the corrected GP across the test domain.}
    \label{fig:error_comparison}
\end{figure}

\begin{figure}
        \centering
        \begin{tikzpicture}
            \begin{axis}[
                colormap/viridis,
                xlabel={${x}$},
                ylabel={$y$},
                zlabel={$ f_2(x,y)$},
                xticklabel style={font=\footnotesize},
                yticklabel style={font=\footnotesize},
                zticklabel style={font=\footnotesize},
                xlabel style={font=\footnotesize},
                ylabel style={font=\footnotesize},
                zlabel style={font=\footnotesize},
                height=0.8\columnwidth, 
                width=0.8\columnwidth,
                view={-45}{25}, 
                grid, 
            ]
            \addplot3[
                surf,
                mesh/rows=20,
                mesh/cols=20,
                shader=interp 
                ]
            table[x=X_test_x, y=X_test_y, z=y_true, col sep=comma] {data/gp_simulation_results2.csv}; 
            \end{axis}
        \end{tikzpicture}
        \caption{Visualization of the true scalar field for the 2D simulation.}
        \label{fig:D Surface Plot of the GP predicted}
\end{figure}
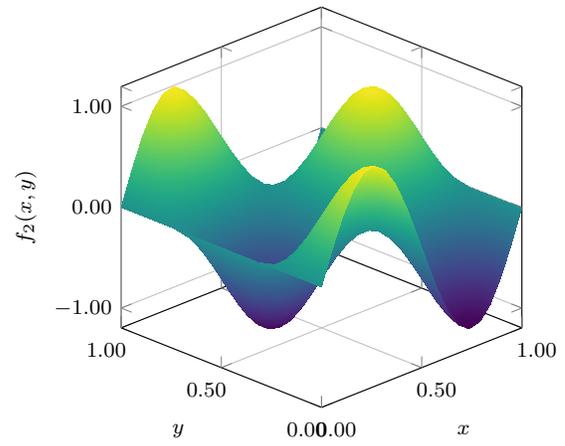
\begin{figure}
    \centering
    \begin{tikzpicture}
        \begin{axis}[%
            colormap/viridis,
            xlabel={$x$},
            ylabel={$y$},
            zlabel={$g(\mathbf{x})$},
            xticklabel style={font=\footnotesize},
            yticklabel style={font=\footnotesize},
            zticklabel style={font=\footnotesize},
            xlabel style={font=\footnotesize},
            ylabel style={font=\footnotesize},
            zlabel style={font=\footnotesize},
            height=0.90\columnwidth, 
            width=0.90\columnwidth,
            view={0}{90}, 
            grid, 
        ]
        \addplot3[
            surf,
            mesh/rows=20,
            mesh/cols=20,
            shader=interp 
            ]
        table[x=X_test_x, y=X_test_y, z=y_true, col sep=comma] {data/gp_simulation_results2.csv}; 
        \end{axis}

        \begin{axis}[%
            axis lines=none, 
            view={90}{90}, 
            width=.90\columnwidth, 
            height=.90\columnwidth, 
            grid=none, 
        ]
            \addplot[
                only marks,
                mark=*,
                color=black,
                mark size=2pt
                ]
            table[x=Train_X_x_original, y=Train_X_y_original, col sep=comma] {data/gp_simulation_results2.csv}; 

            \addplot[
                only marks,
                mark=*,
                color=red,
                mark size=2pt
                ]
            table[x=Train_X_x_corrupted, y=Train_X_y_corrupted, col sep=comma] {data/gp_simulation_results2.csv}; 
            \end{axis}
    \end{tikzpicture}
    \caption{Top-down view of the 2D scalar field, showing the true measurement locations (black dots) and the corrupted locations (red dots).}
    \label{fig:D Surface Plot with measurement locations}
\end{figure}
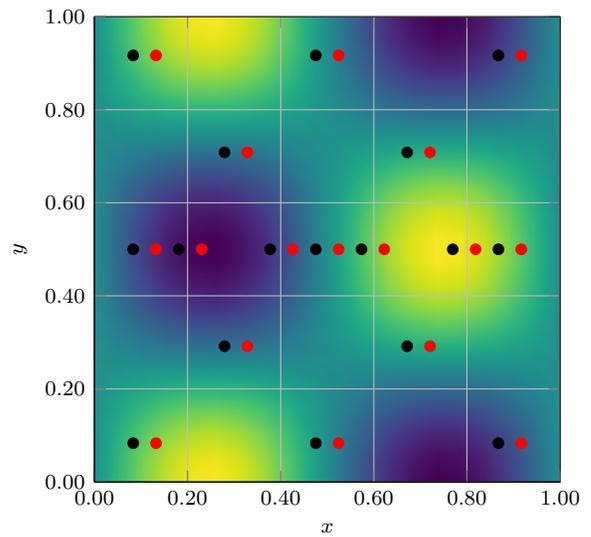
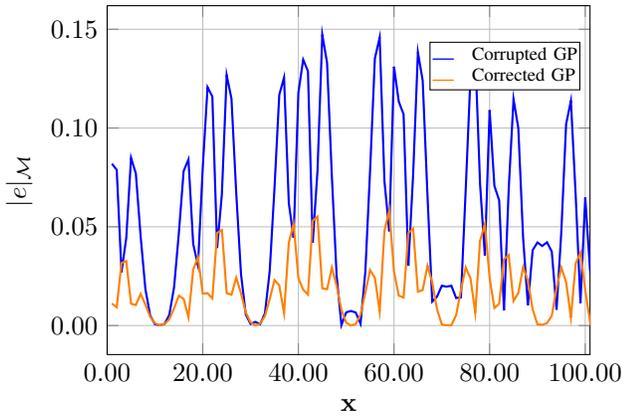
\begin{figure}
    \centering
    \begin{tikzpicture}
        \begin{axis}[
            width=0.45\textwidth,
            height=0.35\textwidth,
            xlabel={$\mathbf{x}$},
            ylabel={$|e|_{\mathcal{M}}$},
            legend pos=north east, 
            legend style={font=\small},
            grid=both,
            xmin = 0,
            xmax=101,
        ]
            \addplot[
                color=blue,
                thick,
            ]
            table [x=n_test_vector, y=error_case2, col sep=comma] {data/gp_simulation_results2.csv};

            \addplot[
                color=orange,
                thick,
            ]
            table [x=n_test_vector, y=error_case3, col sep=comma] {data/gp_simulation_results2.csv};
        \end{axis}

        \node[anchor=north east, draw, fill=white, inner sep=1pt, font=\scriptsize] 
        at (rel axis cs: 0.98, 0.98) {
            \begin{tabular}{@{}l@{}}
                \textcolor{blue}{\rule{8pt}{0.5pt}} \hspace{2pt} Corrupted GP \\
                \textcolor{orange}{\rule{8pt}{0.5pt}} \hspace{2pt} Corrected GP 
            \end{tabular}
        }; 
    \end{tikzpicture}
    \caption{Absolute error of the mean for the 2D simulation, plotted for each of the 100 test query points. The plot compares the error of the corrupted GP with the reduced error of the corrected GP.}
    \label{fig:error_trajectories_2D}
\end{figure}

\section{Discussion}
The developed correction framework introduces a principled method for addressing deterministic input errors in GP models without incurring the computational cost of full retraining. By decoupling gradient computation from the online correction loop, the method enables real-time updates to GP predictions using precomputed Jacobian and Hessian tensors. This yields improvements in both the computational efficiency (Table~\ref{tab:time_improvement_2d}) and accuracy (Table~\ref{tab:percentage_improvement_1d}) compared to conventional GP implementations that ignore location uncertainty. While this work considers the SE kernel due to its infinite differentiability property, the framework is compatible with any choice of kernel that is at least twice differentiable to support the required Hessian computations. Despite its advantages, the method faces limitations in memory scalability. The size of the stored gradient tensors grows with the number of training points and input dimensions, potentially constraining applicability to large-scale problems. This challenge motivates future work on tensor compression and structure-exploiting approximations.

\section{Conclusion}
A gradient-based correction framework has been developed to enable efficient, real-time updates to GP predictions under deterministic input perturbations. The key innovation lies in the offline computation and storage of derivative tensors, which allows the mean and covariance to be corrected without recomputing the kernel inverse.

The method demonstrates performance improvements in both computational speed and accuracy, making it suitable for dynamic environments where measurement errors evolve. Its applicability spans robotic mapping, sensor fusion, and control systems operating under uncertain localization or GPS-denied conditions. To extend the framework to high-dimensional and large-scale domains, future research should focus on reducing the memory usage of the gradient tensors through low-rank approximations or sparsity-aware representations.

\bibliographystyle{IEEETrans.bst}
\bibliography{scc,sccmaster,ECC_bib}
\end{document}